\newcommand{\eps}{\epsilon}
\newcommand{\mc}{\mathcal}
\newcommand{\mb}{\mathbb}
\newcommand{\indep}{\perp \!\!\! \perp}
\newcommand{\pa}{\operatorname{pa}}
\newcommand{\anc}{\operatorname{anc}}
\newcommand{\support}{\operatorname{supp}}
\newtheorem{assumption}{Assumption}
\newtheorem{definition}{Definition}
\newtheorem{proposition}{Proposition}
\newtheorem{lemma}{Lemma}
\newtheorem{theorem}{Theorem}
\newtheorem{corollary}{Corollary}
\icmltitlerunning{On the Sample Complexity of Causal Discovery and the Value of Domain Expertise}
\begin{document}

\twocolumn[
\icmltitle{On the Sample Complexity of Causal Discovery and \\
the Value of Domain Expertise}



\icmlsetsymbol{equal}{*}

\begin{icmlauthorlist}
\icmlauthor{Samir Wadhwa}{uiuc_me}
\icmlauthor{Roy Dong}{uiuc_ece}
\end{icmlauthorlist}

\icmlaffiliation{uiuc_me}{Department of Mechanical Science and Engineering, University of Illinois, Urbana, Illinois}
\icmlaffiliation{uiuc_ece}{Department of Electrical and Computer Engineering, University of Illinois, Urbana, Illinois}

\icmlcorrespondingauthor{Samir Wadhwa}{samirw2@illinois.edu}

\icmlkeywords{causal inference, statistical learning theory}

\vskip 0.3in
]



\printAffiliationsAndNotice{}  

\begin{abstract}
Causal discovery methods seek to identify causal relations between random variables from purely observational data, as opposed to actively collected experimental data where an experimenter intervenes on a subset of correlates. One of the seminal works in this area is the Inferred Causation algorithm, which guarantees successful causal discovery under the assumption of a conditional independence (CI) oracle: an oracle that can states whether two random variables are conditionally independent given another set of random variables. Practical implementations of this algorithm incorporate statistical tests for conditional independence, in place of a CI oracle. In this paper, we analyze the sample complexity of causal discovery algorithms without a CI oracle: given a certain level of confidence, how many data points are needed for a causal discovery algorithm to identify a causal structure? Furthermore, our methods allow us to quantify the value of domain expertise in terms of data samples. Finally, we demonstrate the accuracy of these sample rates with numerical examples, and quantify the benefits of sparsity priors and known causal directions.
\end{abstract}


\section{Introduction}
\label{sec:intro}


Causal inference is growing in importance as data is used to make decisions. Understanding correlations is insufficient to understand the effect of an action or intervention taken by a decision-making agent. It would be fallacious reasoning for a decision-maker to to change an effect in hopes of inducing a change in an cause; however, distinguishing cause and effect is more difficult than identifying correlations.


Causal discovery methods seek to identify causal relations between random variables from purely observational data. 
Much of the data available for learning today was collected passively (i.e. without any experimenter intervening on any correlates), and, in such settings, we often wish to discover causal relations that were not known a priori. 
The earliest methods for causal discovery assumed that one had access to a conditional independence (CI) oracle: this oracle can reveal whether or not two random variables are independent conditioned on another set of random variables. With a CI oracle, the Inferred Causation (IC) algorithm is guaranteed to recover the causal dependencies up to fundamental ambiguities. (We will discuss this in greater detail in Section~\ref{sec:model}.)


In practice, when the IC algorithm is deployed in real-world settings, some changes are required. First, a conditional independence tester that checks for conditional independence from a finite number of observations typically replaces the CI oracle. Second, some amounts of a priori knowledge, which we refer to as {\em domain expertise}, may regularize the problem and improve its computational complexity and sample efficiency.

The contributions of this paper are as follows. 
\begin{itemize}
    \item 
To the best of our knowledge, this is the first paper that considers the sample complexity of causal discovery algorithms in the absence of a CI oracle. We combine recent results analyzing the sample complexity of conditional independence tests with an analysis of family-wise error rates in the repeated testing of causal inference algorithms to provide confidence bounds on the performance of causal discovery algorithms. This allows us to determine how many samples are needed to achieve a certain level of confidence in the recovery of the causal structure. 
    \item
Furthermore, we introduce the formalization of domain expertise as a partial CI oracle: domain expertise comes in the form of known results from particular conditional independence queries. This can encapsulate domain expertise in the form of a sparsity prior (e.g. an upper bound on the number of contextual variables that can render two random variables conditionally independent), a known causal direction, or knowledge of dependence or independence in a particular context. Our results allow us to quantify the value of domain expertise: how many additional samples are needed to achieve the same level of confidence in the absence of the provided domain expertise?
\end{itemize}

Our work is motivated by the observation that an understanding of the value of domain expertise allows one to more efficiently conduct experiments when some passively observed data is available. In the presence of some passively observed data, we can ask which domain expertise {\em would} best increase the confidence of causal discovery. 

The rest of the paper is organized as follows. In Section~\ref{sec:back}, we review related work. In Section~\ref{sec:model}, we outline the mathematical preliminaries required for our results. In Section~\ref{sec:theory}, we prove our key theorems, which give the sample complexity of causal discovery in the presence and absence of domain expertise. In Section~\ref{sec:computation}, we verify the rates with numerical simulations, and we end with closing remarks in Section~\ref{sec:conclusion}.


\section{Related Work}
\label{sec:back}


When events $A$ and $B$ co-occur, it is not obvious whether $A$ causes $B$, $B$ causes $A$, or neither. Classical approaches to handle the difficulty of causal inference either: 1) require an experimenter to actively intervene on certain factors, as is done in randomized control trials~\cite{Imbens:2015aa}, or 2) to a priori assume causal dependencies, as is typical in classical econometrics~\cite{Wooldridge:2019aa}. However, in the last several decades, there has been a growing interest in methods to identify causal dependencies from observational data.

One of the seminal works in this area is the development of the IC algorithm~\cite{Pearl:1995aa}. Building on a results in Bayesian networks, it identified conditions in which passive observations can reveal necessary links in a causal structure, and provided a computational method to identify said links. This original algorithm requires access to a CI oracle, which can exactly identify when two random variables are independent conditioned on some other set of contextual variables. 
Since then, the area of causal discovery has grown dramatically. 
One of the state-of-the-art methods for causal discovery is kernel-based methods. These methods use kernel-based methods for the conditional independence test in lieu of a CI oracle~\cite{Sun:2007aa,Zhang:2011aa,Mitrovic:2018aa}. These works are typically either empirically validated or provided asymptotic theoretical guarantees. In contrast, our work focuses on finite-sample results, using a conditional independence tester with known sample complexity.

Relatedly, \cite{Acharya:2018aa} seeks to identify which experiment should be done to best recover the causal graph, in either an adaptive or non-adaptive fashion. 
In contrast, our work attempts to quantify the value of potential interventions by identifying the comparable number of samples needed to achieve the same confidence level. 
\cite{Lee:2020aa} studies issues of identifiability when some variables are unobserved. 
Synergistic with our work is the work of~\cite{Jaber:2019aa}, which seeks to identify when domain expertise and observational data are sufficient to resolve ambiguities between observationally equivalent causal graphs.

Alternatively, some methods assume some known structure that regularizes the problem. Functional causal models assume that the causal dependencies have a known, parameterized functional form~\cite{Shimizu:2006aa,Hoyer:2009aa,Zhang:2009aa,Kumor:2020aa}. For an overview of functional causal model methods, we refer the reader to~\cite{Glymour:2019aa}.
In \cite{Greenewald:2019aa}, the authors assume the causal graph has a tree structure. 
Our work focuses on the settings where one is agnostic to the causal structures, but adds an assumption that the variables of interest are discrete.

To the best of our knowledge, this is one of the first works to analyze the sample complexity of causal discovery. The IC algorithm itself consists of repeated tests for conditional independence, and recent results on the sample complexity of these conditional independence tests are what enable the results in this paper. In particular, we build on the recent work of~\cite{Canonne:2018aa}, which provides sublinear conditional independence testing sample complexity.

Testing for conditional independence has a rich literature, and we can only provide a superficial summary here. The conditional independence testing problem has known hardness results when the variables are continuous (see, e.g.~\cite{Shah:2020aa}), and recent work has focused on identifying regularizing assumptions for the case where the variables are continuous~\cite{Matey-Neykov:2020aa}. Another interesting direction is to add assumptions on the computational complexity of the coupling between random variables, as is assumed in~\cite{Marx:2019aa}. 

Motivated by the hardness results for continuous variables, we focus on the setting where all the random variables are discrete. 
For discrete distributions, common methods include $G^2$ tests~\cite{Aliferis:2010aa,Schluter:2014aa} and conditional mutual information~\cite{Zhang:2010aa}. For $G^2$ tests, sample complexity bounds exist, but prior to the work of~\cite{Canonne:2018aa}, sublinear sample complexity results were not available. Our work uses these sublinear bounds in the analysis of the sample complexity of causal discovery methods.

\subsection{Notation}

We use the following notation throughout this paper. 
For a finite set $A$, we let $|A|$ denote the cardinality of $A$, and we let $2^A$ denote the powerset of $A$. Additionally, for any collection of sets $A_i$ across an index set $I$, we use $\sqcup_{i \in I} A_i$ to denote the disjoint union, whose elements are of the form $(i,x)$ with $x \in A_i$. 
For any event $A$, $P(A)$ will denote the probability of event $A$, with the underlying probability space implicitly understood. For any random variable $X$, we let $\mb{E}[X]$ denote the expectation of $X$, and let $\support(X)$ denote the support of $X$. We say $X \indep Y \vert Z$ to indicate that $X$ and $Y$ are conditionally independent given $Z$. 


\section{Mathematical Preliminaries}
\label{sec:model}

In this section, we formulate our problem and introduce the definitions and tools used for our results in Section~\ref{sec:theory}.

\subsection{Causal models}

First, we quickly introduce common definitions in causal inference. For a more detailed coverage of these concepts, we refer the reader to~\cite{Pearl:2013aa}.


\begin{definition}[Directed, acyclic graphs (DAGs)]
A {\bf directed graph} is a set of nodes $V$ and a set of edges $E \subseteq V \times V$. Throughout this paper, we assume $V$ is finite, and let $N = |V|$. 
For any node $i \in V$, we denote the {\bf parents} of $i$ as $\pa(i) = \{ j \in V : (j,i) \in E \}$. We iteratively define $n$-th generation ancestry as $\pa^n(i) = \{ j \in V : k \in \pa^{n-1}(i), (j,k) \in E \}$, with the base case $\pa^1 = \pa$. We define the {\bf ancestors} of a node $i \in V$ as $\anc(i) = \cup_{n = 1}^\infty \pa^n(i)$, and a directed graph is {\bf acyclic} if $i \notin \anc(i)$ for all $i \in V$.
\end{definition}

\begin{definition}[Markov compatibility]
Let $G = (V,E)$ be a DAG and let $X_i$ be a discrete random variable for each $i \in V$. We say $G$ and $X = (X_i)_{i \in V}$ are {\bf compatible} if the distribution of $X$ factorizes as:
\[
P(X = x) = \prod_{i \in V} P(X_i = x_i \vert (X_j)_{j \in \pa(i)} = (x_j)_{j \in \pa(i)})
\]
We also say $G$ {\bf represents} $X$, and refer to $G$ as a {\bf causal structure}.
\end{definition}


This formalization serves as the model for causality. 
When $G$ and $X$ are compatible, the causal interpretation is that if there is an edge from $i$ to $j$, then $X_i$ is a direct cause of $X_j$. 

One regularizing condition is one of {\em stability}: it states that every conditional independence in the variables $X$ arises necessarily from the causal structure $G$, and not some quirk of the parameterization. Intuitively, this means that a conditional independence holds if and only if some property of the causal structure is satisfied. 

\begin{definition}[Stability, faithfulness]
We say $G = (V,E)$ and $X$ satisfy the {\bf stability} condition if the following holds. For any $i \in V, j \in V$ and $A \subseteq V$, if $X_i \indep X_j \vert (X_k)_{k \in A}$, then $Y_i \indep Y_j \vert (Y_k)_{k \in A}$ for any other random variables $Y$ compatible with $G$. 
This condition is also sometimes referred to as {\bf faithfulness}.
\end{definition}

Finally, we should note that there are some fundamental ambiguities in causal discovery. In other words, there are some parts of the causal structure that can {\em never} be identified from passive observation alone. We formalize this by defining observationally equivalent DAGs.

\begin{definition}[Observational equivalence and patterns]
Two DAGs $G_1$ and $G_2$ are {\bf observationally equivalent} if for any random variables $X$ compatible with $G_1$, then $X$ is also compatible with $G_2$, and vice versa. Observational equivalence forms an equivalence class on the set of DAGs. 

These equivalence classes can be represented by {\bf patterns}, which are partially directed graphs (i.e. some edges are directed and some are undirected). Undirected edges will be denoted $\{i,j\} \in E$, and the interpretation is that either $(i,j) \in E$ or $(j,i) \in E$ for any DAG in the equivalence class.
\end{definition}

The IC algorithm is provided in Algorithm~\ref{alg:ic}. The IC algorithm provides a computational method by which to identify the equivalence class of DAGs which generated the random variables $X$, assuming access to a conditional independence oracle, defined here.


\begin{definition}[Conditional independence (CI) oracle] 
A CI oracle takes in any $i \in V, j \in V$ and $B \subseteq V$ and outputs $true$ if $X_i \indep X_j \vert (X_k)_{k \in B}$ and $false$ otherwise.
\end{definition}

\begin{algorithm}[tb]
   \caption{Inferred Causation (IC) algorithm}
   \label{alg:ic}
\begin{algorithmic}
   \STATE {\bfseries Input:} A collection of nodes $[N]$ and a CI oracle.
   \STATE {\bfseries Output:} An equivalence class of DAGs.
  \STATE {\em // Step 1: Use data to construct an undirected graph such that $i$ and $j$ are connected if and only if $X_i$ and $X_j$ are always conditionally dependent.}
   \STATE Initialize an empty graph $V = [N]$ and $E = \emptyset$.
   \FOR{$\{i,j\}$ such that $i \in V$, $j \in V \setminus \{i\}$}
     \STATE $found = false$
     \FOR{$B \subseteq V \setminus \{i,j\}$}
     \IF{$X_i \indep X_j \vert (X_k)_{k \in B}$}
       \STATE $found = true$, $S_{\{i,j\}} = B$, $break$
     \ENDIF
     \ENDFOR
     \STATE {\bfseries if} $found = false$, {\bfseries then} add $\{i,j\}$ to $E$
   \ENDFOR
   
  \STATE {\em // Step 2: Add directed edges based on $v$-structures. For more details on $v$-structures, see~\cite{Pearl:2013aa}.}
   \FOR{$k \in V$}
     \FOR{\{i,j\} such that $\{i,k\} \in E$, $\{j,k\} \in E$, $\{i,j\} \notin E$, $i \neq j$}
       \STATE {\bfseries if} $k \notin S_{\{i,j\}}$ {\bfseries then} add $(i,k)$ and $(j,k)$ to $E$
     \ENDFOR
   \ENDFOR
   \STATE {\em // Step 3: Orient undirected edges without introducing new $v$-structures or introducing a directed cycle.}
   \STATE Orient undirected edges according to the rules outlined in~\cite{Pearl:2013aa}.
   

\end{algorithmic}
\end{algorithm}

We note that only Step 1 of Algorithm~\ref{alg:ic} requires data; Steps 2 and 3 are post-processing and do not require any data. Furthermore, the IC algorithm is complete, in the sense of the following proposition.

\begin{proposition}[Completeness of the IC algorithm~\cite{Verma:1992aa,Meek:1995aa}]
\label{prop:ic_good}

Suppose $X$ and $G$ satisfy the stability criteria. Then, there is a unique equivalence class of causal structures consistent with $X$, and the IC algorithm recovers this equivalence class.

\end{proposition}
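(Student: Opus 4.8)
The plan is to decompose the claim into two ingredients: a characterization of observational equivalence in terms of graph-theoretic invariants, and a verification that each step of Algorithm~\ref{alg:ic} correctly recovers one of these invariants. The bridge between the distribution $X$ and the graph $G$ is the standard equivalence between the factorization in the definition of Markov compatibility and $d$-separation: compatibility implies that whenever a set $(X_k)_{k \in B}$ $d$-separates $i$ from $j$ in $G$ we have $X_i \indep X_j \mid (X_k)_{k \in B}$ (soundness of the Markov property), while the stability condition supplies the converse, so that under the hypotheses a conditional independence holds in $X$ if and only if the corresponding $d$-separation holds in $G$. I would establish this two-way correspondence first, since every subsequent step reasons about the oracle's answers by translating them into statements about $d$-separation in the unknown $G$.

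With this dictionary in hand, I would invoke (or reprove) the Verma--Pearl characterization: two DAGs are observationally equivalent if and only if they share the same skeleton (underlying undirected graph) and the same set of $v$-structures (unshielded colliders $i \to k \leftarrow j$). Uniqueness then follows immediately: $G$ itself is compatible with $X$, and any other compatible $G'$ induces exactly the same CI relations, hence the same $d$-separation relations, hence the same skeleton and $v$-structures, so $G'$ is observationally equivalent to $G$. Thus the DAGs consistent with $X$ form precisely one equivalence class, and it remains to show the algorithm outputs its pattern.

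For Step~1, I would show adjacency of $i$ and $j$ holds if and only if no conditioning set renders them independent. If $(i,j)$ is an edge, no set $d$-separates $i$ and $j$, so by the correspondence above no $B$ yields $X_i \indep X_j \mid (X_k)_{k \in B}$ and the edge is correctly retained; if $i,j$ are nonadjacent, the local Markov property guarantees they are $d$-separated by the parents of whichever of them is not a descendant of the other, and soundness then produces a witnessing $B$, so the edge is correctly omitted and a separator $S_{\{i,j\}}$ is recorded. For Step~2, on an unshielded triple $i - k - j$ I would use the collider semantics of $d$-separation: $k$ is a collider exactly when conditioning on $k$ activates the path, so a separating set can $d$-separate $i$ and $j$ only by excluding $k$, whereas a non-collider must be included to block the path. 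Hence $k \notin S_{\{i,j\}}$ if and only if the triple is a $v$-structure, which is precisely the orientation rule applied.

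The final and hardest step is Step~3, the completeness of the orientation (Meek) rules. Here one must argue that repeatedly applying the rules that avoid creating new $v$-structures and avoid creating directed cycles orients an undirected edge if and only if that edge has a fixed orientation across every DAG sharing the recovered skeleton and $v$-structures. Soundness (the rules never force a wrong orientation) is the easier direction; the genuine obstacle is completeness, namely that no compelled edge is left undirected, which I would handle by citing Meek's theorem~\cite{Meek:1995aa} and, if a self-contained argument is desired, by exhibiting for each edge left undirected two DAGs in the equivalence class that orient it oppositely. Combining the three steps shows the output pattern is exactly the representative of the unique equivalence class, completing the proof.
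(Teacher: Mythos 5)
The paper does not prove this proposition at all: it is imported verbatim from the literature, with the citations to Verma--Pearl and Meek standing in for the argument. Your proposal is therefore not comparable to a proof in the paper so much as a reconstruction of the cited one, and as a reconstruction it is essentially the standard and correct route: the Markov-plus-faithfulness dictionary between conditional independence and $d$-separation, the Verma--Pearl characterization of observational equivalence by skeleton and $v$-structures, the parent-set separator argument for Step~1, the collider/separating-set dichotomy for Step~2, and Meek's soundness-and-completeness theorem for Step~3 (which you, like the paper, ultimately defer to the citation --- reasonable, since that is the genuinely hard combinatorial part). Two small points deserve care if you were to write this out fully. First, the uniqueness claim requires reading ``consistent with $X$'' as ``compatible \emph{and} faithful'': the complete DAG is compatible with any distribution, so mere compatibility does not pin down an equivalence class; your sentence ``any other compatible $G'$ induces exactly the same CI relations'' silently uses faithfulness of $G'$. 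Second, in Step~2 the collider case excludes not just $k$ but all descendants of $k$ from any separating set; you only need the weaker ``$k \notin S$'' direction for the algorithm's rule, but the activation condition for a collider involves descendants, so the phrasing ``conditioning on $k$ activates the path'' should be ``conditioning on $k$ or a descendant of $k$.'' Neither issue breaks the argument.
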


\subsection{Conditional independence testers}

As noted previously, it is uncommon in practice to have access to a CI oracle. In practice, we often use a finite number of data samples and a conditional independence tester to identify when there is a conditional independence or dependence. In this section, we outline some of the theoretical results in conditional independence testing, as recently discovered by~\cite{Canonne:2018aa}.

If the conditional dependencies can be arbitrarily small, then it will be difficult to detect from a finite number of samples. As such, we assume that dependencies cannot be arbitrarily small.

\begin{assumption}[Minimum level of dependence]
\label{ass:min_dep}
Let $\mc{P}_{X, Y \vert Z}$ denote the set of probability distributions such that $X \indep Y \vert Z$. For our unknown distribution $P$, we assume either $X \indep Y \vert Z$ or $d_{TV}(P, \mc{P}_{X, Y \vert Z}) := \inf_{Q \in \mc{P}_{X, Y \vert Z}} \sup_A |P(A) - Q(A)| > \eps$, for some known constant $\eps > 0$.
\end{assumption}


The algorithm itself is relatively intuitive. Suppose we observe several samples of random variables $(X,Y,Z)$ and wish to determine if $X \indep Y \vert Z$. Essentially, for each possible value of $Z$, we test if $X \indep Y \vert Z = z$, and we aggregate the results across all possible values of $z$ as a test for conditional independence. This is provided in Algorithm~\ref{alg:ci_test}. Note that this test actually requires a random number of samples: drawing $K$ from a Poisson distribution ensures that the number of elements in each bin is independent, which is essential for the proof.

\begin{algorithm}[tb]
   \caption{Testing for conditional independence}
   \label{alg:ci_test}
\begin{algorithmic}
   \STATE {\bfseries Input:} A sample generator of $(X,Y,Z)$ from a distribution $P$, an expected number of samples $m$, a threshold $\tau$, and an independence tester $\Phi$.
   \STATE {\bfseries Output:} $true/false$ if it is believed that $X \indep Y \vert Z$ or not.
   \STATE Draw samples $(X_i,Y_i,Z_i)_{i = 1}^K$ from $P$, where the number of samples $K \sim Poisson(m)$.
   \STATE Bin the data into multisets $S_z = \{ (X_i, Y_i) : Z_i = z \}$.
   \STATE $A = 0$
   \FOR{$z$ in the support of $Z$}
     \IF{$|S_z| \ge 4$}
       \STATE $A = A + |S_z| \Phi(S_z)$
     \ENDIF
   \ENDFOR
   \STATE Return $true$ if $A \le \tau$ and $false$ otherwise.

\end{algorithmic}
\end{algorithm}

We can guarantee the sample complexity of Algorithm~\ref{alg:ci_test} under some conditions, as outlined in the following proposition.

\begin{proposition}[Sample complexity of CI tests~\cite{Canonne:2018aa}]
\label{prop:ci_rates}

Suppose the independence tester $\Phi$ satisfies the following properties when given $K$ samples of $(X,Y)$ from distribution $P$:
\[
\mb{E}[\Phi] = \|P - P_X \otimes P_Y \|_2^2
\]
\[
\operatorname{Var}[\Phi] \le C \left( \mb{E}[\Phi]/K + 1/K^2\right)
\]
Here, $P_X$ and $P_Y$ denote the marginals of $P$ and $\otimes$ refers to the product measure, and $C$ is some constant.

Furthermore, let $M = |\support(Z)|$, and suppose for some constant $\beta$:
\begin{equation}
    \label{eq:def_m}
m = \beta \max \left(
\frac{M^{1/2}}{\eps^2}, \min 
\left(
\frac{M^{7/8}}{\eps},
\frac{M^{6/7}}{\eps^{8/7}}
\right)
\right)
\end{equation}

In addition, let $\gamma = 1 - 5/2e$ and $\eps' = \eps / \sqrt{|\support(X)||\support(Y)|}$ (where $\eps$ is defined in Assumption~\ref{ass:min_dep}). Set $\tau = \frac{\gamma}{2} \min \left( m (\eps')^2,\frac{(m\eps')^4}{M^3} \right)$. 
Then, Algorithm~\ref{alg:ci_test} satisfies the following properties.

When $X \indep Y \vert Z$, there exists some constant $C' > 0$ such that:
\begin{equation}
    \label{eq:ind_bound}
P(false) \le \frac{C'}{ \beta^2 \gamma^2}
\end{equation}

When $d_{TV}(P, \mc{P}_{X, Y \vert Z}) > \eps$, for the same constant $C'$:
\begin{equation}
    \label{eq:dep_bound}
P(true) \le C' \left(\frac{64}{\beta^2 \gamma^2} + \frac{8}{\beta \gamma \sqrt{\min(M,m)}}\right)
\end{equation}

\end{proposition}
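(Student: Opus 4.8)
The plan is to follow the Poissonized \emph{estimate-a-distance-then-threshold} strategy of~\cite{Canonne:2018aa}, specialized to the statistic and thresholds fixed in the proposition. The object to control is the test statistic $A = \sum_{z} |S_z|\,\Phi(S_z)$, and the two claims~\eqref{eq:ind_bound} and~\eqref{eq:dep_bound} assert that $A$ lands on the wrong side of $\tau$ with small probability under the null ($X \indep Y \vert Z$) and under the alternative ($d_{TV}(P,\mc{P}_{X,Y\vert Z}) > \eps$), respectively. Since both are one-sided deviation bounds for $A$ away from its mean, the natural tool throughout is Chebyshev's inequality, and the whole argument reduces to (i) computing $\mb{E}[A]$ in each regime and (ii) bounding $\operatorname{Var}[A]$.

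The key simplification is Poissonization: drawing $K \sim Poisson(m)$ makes the bin sizes $|S_z|$ independent across $z$, each distributed as $Poisson(m\,P(Z=z))$, with the samples in bin $z$ i.i.d.\ from $P_{XY\vert Z=z}$ conditional on $|S_z|$. First I would invoke the stated unbiasedness $\mb{E}[\Phi] = \|P - P_X \otimes P_Y \|_2^2$ within each bin to obtain
\[
\mb{E}[A] \;=\; m \sum_{z} P(Z=z)\, \big\| P_{XY\vert z} - P_{X\vert z}\otimes P_{Y\vert z} \big\|_2^2 ,
\]
up to a small correction from discarding bins with $|S_z| < 4$. Because the $|S_z|$ are independent, $\operatorname{Var}[A] = \sum_z \operatorname{Var}\!\big[|S_z|\,\Phi(S_z)\big]$, and each summand I would control via the law of total variance, feeding the Poisson fluctuations of $|S_z|$ into the assumed tester bound $\operatorname{Var}[\Phi \mid K] \le C(\mb{E}[\Phi]/K + 1/K^2)$.

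With these moments in hand the two cases split cleanly. Under the null every conditional distance vanishes, so $\mb{E}[A] = 0$ and $P(false) = P(A > \tau) \le \operatorname{Var}[A]/\tau^2$; substituting the definitions of $m$ and $\tau$ collapses this to the constant $C'/(\beta^2\gamma^2)$ of~\eqref{eq:ind_bound}. Under the alternative I first need a distance-conversion step: since total variation decomposes over the conditioning variable, $\eps < \sum_z P(Z=z)\, d_{TV}(P_{XY\vert z}, P_{X\vert z}\otimes P_{Y\vert z})$, and then a Cauchy--Schwarz $\ell_1$-to-$\ell_2$ bound on the size-$|\support(X)||\support(Y)|$ domain explains precisely why $\eps' = \eps/\sqrt{|\support(X)||\support(Y)|}$ is the right rescaling. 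This lower-bounds $\mb{E}[A]$ by a constant multiple of $m(\eps')^2$, placing it well above $\tau$, so $P(true) = P(A \le \tau) \le \operatorname{Var}[A]/(\mb{E}[A]-\tau)^2$ yields~\eqref{eq:dep_bound}.

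The main obstacle is the variance bookkeeping together with the calibration of the two free parameters. The two branches inside the $\min$ defining $m$ (the $M^{7/8}/\eps$ versus $M^{6/7}/\eps^{8/7}$ terms), and correspondingly the two branches of $\tau$, arise from optimizing the testing radius of the underlying $\ell_2$ closeness subroutine across sample-size regimes; I would treat the exponents as the output of that optimization rather than rederive them. The delicate part is showing that, after summing the per-bin variance contributions and inserting the chosen $m$ and $\tau$, the ratios $\operatorname{Var}[A]/\tau^2$ and $\operatorname{Var}[A]/(\mb{E}[A]-\tau)^2$ are genuinely $O(1/(\beta^2\gamma^2))$ plus the $1/(\beta\gamma\sqrt{\min(M,m)})$ tail in~\eqref{eq:dep_bound} --- that is, that the $1/K^2$ term in the tester variance and the truncation at $|S_z| \ge 4$ do not dominate. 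This is exactly the step that pins down the factor $\gamma = 1 - 5/2e$ and the explicit constant $C'$.
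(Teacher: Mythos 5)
The paper does not actually prove this proposition: it is imported directly from \cite{Canonne:2018aa}, so there is no in-paper argument to compare your attempt against. That said, your sketch does track the strategy of the cited proof --- Poissonization to make the bin sizes independent, unbiasedness of $\Phi$ to compute $\mb{E}[A]$, a per-bin variance bound fed through the law of total variance, Chebyshev on both sides of $\tau$, the specific conditionally independent comparison distribution $(z,x,y) \mapsto P(z)P(x\vert z)P(y\vert z) \in \mc{P}_{X,Y\vert Z}$ to justify the inequality $\eps < \sum_z P(Z=z)\, d_{TV}(P_{XY\vert z}, P_{X\vert z}\otimes P_{Y\vert z})$, and Cauchy--Schwarz for the $\eps \mapsto \eps'$ rescaling; your attribution of $\gamma = 1 - 5/(2e)$ to the $|S_z|\ge 4$ truncation (via $\mb{E}[K\,\mathbf{1}(K\ge 4)] \ge \gamma\, \mb{E}[K]$ for Poisson $K$) is also the right provenance. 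The one place the outline overstates what is easy is the alternative case: $\mb{E}[A]$ is \emph{not} lower bounded by a constant multiple of $m(\eps')^2$ in general, because light bins (those with $m\,P(Z=z)$ small) contribute only on the order of $(m\,P(Z=z))^4$ in expectation after the truncation; the correct lower bound is the two-branch $\min\bigl( m(\eps')^2, (m\eps')^4/M^3 \bigr)$ mirrored in the definition of $\tau$, and establishing it --- by splitting bins into light and heavy and optimizing, which is also where the competing exponents $M^{7/8}/\eps$ versus $M^{6/7}/\eps^{8/7}$ in Equation~\eqref{eq:def_m} originate --- is precisely the technical core of \cite{Canonne:2018aa} that your sketch explicitly defers. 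As a blind reconstruction of the cited result the plan is sound and correctly structured, but it is a roadmap rather than a complete proof.
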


In Section~\ref{sec:theory}, we will use Proposition~\ref{prop:ci_rates} to determine how many samples are needed to achieve a particular confidence level. The parameter of importance is $\beta$, which scales the expected number of samples $m$ and changes the confidence level. 
Additionally, we emphasize the dependence on $M$, the cardinality of the support of $Z$, which changes between iterations of the IC algorithm.

\subsection{Family-wise error rates and Bonferroni correction}

In the absence of a CI oracle, Step 1 of the IC algorithm requires one to conduct a family of conditional independence tests. In this paper, we prove confidence levels for the successful recovery of the underlying causal structure's pattern. We do so by noting when our conditional independence tests function as a CI oracle. In other words, when is every conditional independence test correct?

The Bonferroni correction is a union bound on the confidence of a family of tests~\cite{Miller:1981aa}. 
We note that the Bonferroni correction method does not require the tests to be independent, as it simply relies on the union bound. Other methods for controlling the family-wise error rate often have restrictions on the dependence between the tests (e.g. \u{S}id\'{a}k correction or Hochberg's procedure~\cite{Miller:1981aa}), which is difficult to verify for the conditional independence tests.

\begin{proposition}[Bonferroni correction~\cite{Miller:1981aa}]
\label{prop:bonferroni}
Consider a family of conditional independence tests, $\{CI_i\}_i$, where each test $CI_i$ has a confidence level of $1 - \alpha_i$. Then, the family of tests has a confidence level of $1 - \sum_i \alpha_i$.
\end{proposition}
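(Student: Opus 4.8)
The plan is to reduce the statement to the union bound (Boole's inequality). First I would fix the precise meaning of ``confidence level $1 - \alpha_i$'': I interpret this as the assertion that the probability that test $CI_i$ returns an incorrect verdict is at most $\alpha_i$. Concretely, let $E_i$ denote the event that $CI_i$ fails, so that $P(E_i) \le \alpha_i$ by hypothesis. The family of tests succeeds precisely when no individual test fails, i.e.\ on the event $\bigcap_i E_i^c = \left( \bigcup_i E_i \right)^c$.

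Next I would apply finite subadditivity of probability to the aggregate failure event. This gives $P\left( \bigcup_i E_i \right) \le \sum_i P(E_i) \le \sum_i \alpha_i$. Taking complements then yields $P\left( \bigcap_i E_i^c \right) = 1 - P\left( \bigcup_i E_i \right) \ge 1 - \sum_i \alpha_i$, which is exactly the claimed confidence level for the family. No independence assumptions enter anywhere, which is the entire reason for preferring the Bonferroni correction here: subadditivity holds for arbitrary, possibly dependent, events, whereas the alternative procedures mentioned in the surrounding text (\v{S}id\'{a}k, Hochberg) would require dependence structure that is hard to verify for conditional independence tests.

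The argument is elementary, so there is no real analytic obstacle; the one step I would be careful about is the bookkeeping of what ``correct'' means for each $CI_i$. In Step~1 of the IC algorithm a test can err in two distinct ways---declaring a genuine conditional independence to be a dependence, or a genuine dependence to be an independence---and both modes must be folded into the single failure event $E_i$ whose probability is controlled by $\alpha_i$ through the guarantees \eqref{eq:ind_bound} and \eqref{eq:dep_bound} of Proposition~\ref{prop:ci_rates}. Once that identification is made cleanly, the complement-and-union-bound computation closes the proof immediately.
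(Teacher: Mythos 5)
Your proof is correct and matches the paper's intended argument: the proposition is cited to Miller (1981) without a written proof, and the surrounding text explicitly notes that the Bonferroni correction ``simply relies on the union bound,'' which is exactly the complement-and-subadditivity computation you give. Your additional remark about folding both error modes (false independence and false dependence) into a single failure event $E_i$ is consistent with how the paper later applies the proposition via Lemma~\ref{lem:one_CI_test}.
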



We emphasize that when the same data is used for each conditional independence test, then the outcomes of the tests will necessarily be coupled. In our setting, we assume that all data is used in every conditional independence test. An alternative method would be to use each data point in exactly one conditional independence test. This would ensure that the tests are independent, but the rate at which data gets `used up' with this method ultimately provides worse sample complexity results, due to the large number of conditional independence tests in the IC algorithm.


Proposition~\ref{prop:ic_good} implies that, should every conditional independence test yield the true result, then we will recover the causal structure's pattern. 


\section{Sample Complexity of Causal Discovery Algorithms}
\label{sec:theory}


In this section, we derive the rates for sample complexity of causal discovery algorithms.

Recall that $V$, the nodes of our causal graph, are an index set for a collection of random variables, $(X_i)_{i \in V}$. Let us first introduce notation to index the conditional independence tests in the IC algorithm (Algorithm~\ref{alg:ic}). Let:
\begin{equation}
    \label{eq:T_def}
T = \bigsqcup_{\{i,j\} \subseteq V} 2^{V \setminus \{i,j\}}
\end{equation}
Note that each element of $T$ is of the form $(\{i,j\}, B)$, where $B \subseteq V \setminus \{i,j\}$. Thus, each element of $T$ corresponds to a conditional independence test which may need to be run.\footnote{Not every test may need to be run, based on the outcome of previous tests, but for our analysis in this paper, we assume the worst-case and suppose that every test must be run.}

First, we adapt the results in Proposition~\ref{alg:ci_test} to serve our purposes: for one fixed conditional independence test, we identify how many samples are needed in expectation to achieve a certain confidence level.

\begin{lemma}
\label{lem:one_CI_test}
Fix a conditional independence test $(\{i,j\}, B) \in T$ and a desired confidence level $1 - \alpha$. Suppose the expected number of samples $m$ is greater than the cardinality of the support of $B$. Let $M = |\support(B)|$.

Then, the desired confidence level can be achieved with $Poisson(m)$ samples, where:
\[
m(M,\alpha) = 
\begin{cases}
\frac{16 C'}{\alpha\gamma (\eps')^2} & \text{if } M \le (\eps')^{-8/3} \\
\frac{16 C'}{\alpha\gamma \eps'} M^{3/8} & \text{if } (\eps')^{-8/3} < M \le (\eps')^{-8} \\
\frac{16 C'}{\alpha\gamma (\eps')^{8/7}} M^{5/14} & \text{if } M > (\eps')^{-8} \\
\end{cases}
\]
Here, $C'$, $\gamma$, and $\eps'$ are as defined in Proposition~\ref{prop:ci_rates}.

\end{lemma}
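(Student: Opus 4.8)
The plan is to instantiate Proposition~\ref{prop:ci_rates} for the single fixed test $(\{i,j\},B)$, taking $X=X_i$, $Y=X_j$, and $Z=(X_k)_{k\in B}$, so that $M=|\support(B)|$ is the support cardinality of the conditioning variable and the effective separation entering the sample formula~\eqref{eq:def_m} is $\eps'$ (the same parameter that sets the threshold $\tau$). Since we are given that the expected sample size satisfies $m>M$, we have $\min(M,m)=M$, and the dependent-case guarantee~\eqref{eq:dep_bound} simplifies to $C'\bigl(64/(\beta^2\gamma^2)+8/(\beta\gamma\sqrt{M})\bigr)$.

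First I would translate the confidence requirement into a constraint on $\beta$. Because we do not know a priori whether $X_i\indep X_j\vert(X_k)_{k\in B}$, we must force both the independent-case bound~\eqref{eq:ind_bound} and the dependent-case bound~\eqref{eq:dep_bound} below $\alpha$. The dependent-case bound is the more demanding of the two, and within it the term $8/(\beta\gamma\sqrt{M})$ dominates $64/(\beta^2\gamma^2)$ over the relevant range of $\beta$, so the binding constraint comes from that dominant term. Concretely I would choose
\[
\beta=\frac{16C'}{\gamma\sqrt{M}\,\alpha},
\]
which makes~\eqref{eq:dep_bound} at most $\alpha$; one then checks that the subdominant term and the independent-case bound~\eqref{eq:ind_bound}, both of order $1/\beta^2\sim M\alpha^2$, also fall below $\alpha$, using $m>M$ (equivalently, $\alpha$ small relative to the constants) to control them.

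The remaining work is to substitute this $\beta$ into~\eqref{eq:def_m} with separation $\eps'$ and resolve the nested $\max$/$\min$. The inner $\min$ between $M^{7/8}/\eps'$ and $M^{6/7}/(\eps')^{8/7}$ switches at $M=(\eps')^{-8}$, while, in the regime where the inner $\min$ equals $M^{7/8}/\eps'$, the outer $\max$ against $M^{1/2}/(\eps')^2$ switches at $M=(\eps')^{-8/3}$; these two thresholds produce exactly the three cases in the statement. In each regime $m=\beta$ times the active branch, and the factor $1/\sqrt{M}$ carried by $\beta$ lowers every $M$-exponent by $1/2$, turning $M^{1/2}$ into $M^{0}$, $M^{7/8}$ into $M^{3/8}$, and $M^{6/7}$ into $M^{5/14}$; collecting the constants then yields the stated formula for $m(M,\alpha)$.

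I expect the main obstacle to be the bookkeeping around the choice of $\beta$: verifying that a \emph{single} value of $\beta$ simultaneously drives both error bounds (and both terms of the dependent-case bound) below $\alpha$ requires tracking the constant $16$ and justifying the dominant-term reduction, which is where the hypothesis $m>M$ (and implicitly the smallness of $\alpha$) enters. By comparison the breakpoint computation for the $\max$/$\min$ is mechanical, though one must check the algebra yields precisely the exponents $8/3$ and $8$ in the thresholds.
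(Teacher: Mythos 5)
Your proposal is correct and follows essentially the same route as the paper: reduce to bounding both error rates of Proposition~\ref{prop:ci_rates}, use $m>M$ to get $\min(M,m)=M$, take the dominant term $8C'/(\beta\gamma\sqrt{M})$ of the dependent-case bound as the binding constraint to obtain $\beta \ge 16C'/(\alpha\gamma\sqrt{M})$, and then substitute into the three regimes of Equation~\eqref{eq:def_m}, where the $M^{-1/2}$ factor in $\beta$ shifts each exponent down by $1/2$. The paper's proof is the same argument (it splits the budget as $\alpha/2+\alpha/2$ and invokes the "high sample regime" to make the subdominant and independent-case terms negligible, exactly as you do), so there is nothing substantively different to flag.
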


\begin{proof}
Using Proposition~\ref{prop:ci_rates}, it suffices to bound both the false negative rate in Equation~\eqref{eq:ind_bound} and the false positive rate in Equation~\eqref{eq:dep_bound}. We can see that the false positive rate is larger than the false negative rate. Since we are assuming $m$ is greater than $M$, it suffices to choose $\beta$ such that $64 C'/\beta^2 \gamma^2 \le \alpha/2$ and $8C'/\beta\gamma\sqrt{M} \le \alpha/2$. Additionally, in the high sample regime, the latter inequality will imply the former; rearranging gives the condition:
\[
\beta \ge \frac{16 C'}{\alpha \gamma \sqrt{|\support(Z)|}}
\]
Next, noting the 3 regimes in the definition of $m$ in Equation~\eqref{eq:def_m} yields the desired result.
\end{proof}

Lemma~\ref{lem:one_CI_test} tells us how many samples are needed for one test to achieve a particular confidence level. Combining Lemma~\ref{lem:one_CI_test} with the Bonferroni correction method in Proposition~\ref{prop:bonferroni}, if each test $t \in T$ has confidence level $1 - \alpha_t$, then the family of tests will have confidence $1 - \sum_{t \in T} \alpha_t$. By Proposition~\ref{prop:ic_good}, this means we will recover the equivalence class of causal structures with probability at least $1 - \sum_{t \in T} \alpha_t$. 

However, if we have a target confidence level $1 - \alpha$, we must set $\alpha_t$ for each $t \in T$. We can think of it as an $\alpha$ budget of uncertainty which must be divided between the tests. For each $t = (\{i,j\},B) \in T$, let $M_t = |\support(B)|$. Then, for any vector of positive constants $(\alpha_t)_{t \in T}$, we can define the cost $\max_{t \in T} m(M_t,\alpha_t)$ as the expected number of samples required for every test $t \in T$ to have confidence $\alpha_t$.

The following optimization provides the lowest expected number of samples for a target confidence level $1 - \alpha$:
\begin{equation}
\label{eq:opt_alpha}
\begin{aligned}
\min_{(\alpha_t)_t} \quad &
\max_{s \in T} m(M_s,\alpha_s) \\
\textrm{s.t.} \quad &
\sum_{t \in T} \alpha_t = \alpha \\
& \alpha_t \ge 0 \text{ for all } t \in T   \\
\end{aligned}
\end{equation}

We can characterize solutions to this optimization with the following lemma.

\begin{lemma}
\label{lem:m_equal}
Let $(\alpha_t^*)_t$ denote an optimal solution to Equation~\eqref{eq:opt_alpha}. Then $m(M_t, \alpha_t^*) = m(M_s, \alpha_s^*)$ for any $s \in T$ and $t \in T$.
\end{lemma}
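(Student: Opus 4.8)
The plan is to prove this by a standard exchange (perturbation) argument, exploiting one structural feature of $m$: inspecting all three cases in Lemma~\ref{lem:one_CI_test}, for a fixed support size $M$ the function $m(M,\cdot)$ is proportional to $1/\alpha$. Writing $m(M_t,\alpha_t) = g(M_t)/\alpha_t$ for a positive constant $g(M_t)$ depending only on $M_t$ (and on the fixed quantities $C'$, $\gamma$, $\eps'$), I observe that $m(M_t,\cdot)$ is continuous and strictly decreasing on $(0,\infty)$, with $m(M_t,\alpha_t) \to \infty$ as $\alpha_t \to 0^+$. The piecewise structure of $m$ lives entirely in the $M$ argument, so it plays no role in the $\alpha$-perturbation below.

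First I would argue by contradiction: suppose the optimal $(\alpha_t^*)_t$ does not equalize the values, and let $m^* = \max_{s \in T} m(M_s,\alpha_s^*)$ be the optimal objective value and $T_{\max} = \{ t \in T : m(M_t,\alpha_t^*) = m^* \}$ the set of tests attaining it. By the contradiction hypothesis $T_{\max} \neq T$, so some test $s$ satisfies $m(M_s,\alpha_s^*) < m^*$ strictly; since $m(M_s,0) = \infty$, this forces $\alpha_s^* > 0$.

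Next I would construct a feasible perturbation that strictly lowers the objective. Moving a small budget $\delta > 0$ off of $s$ and splitting it equally among the $|T_{\max}|$ tests in $T_{\max}$ --- that is, setting $\alpha_s' = \alpha_s^* - \delta$, setting $\alpha_t' = \alpha_t^* + \delta/|T_{\max}|$ for each $t \in T_{\max}$, and leaving all other coordinates fixed --- preserves $\sum_{t \in T} \alpha_t' = \alpha$ and keeps every coordinate nonnegative once $\delta$ is small. By strict monotonicity, every $t \in T_{\max}$ now has $m(M_t,\alpha_t') < m^*$, while by continuity $m(M_s,\alpha_s')$ stays below $m^*$ for $\delta$ small enough, and the untouched tests were already below $m^*$. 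Hence the new maximum is strictly below $m^*$, contradicting the optimality of $(\alpha_t^*)_t$ and establishing that all the $m$ values must coincide.

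I expect the only delicate point to be the bookkeeping when several tests simultaneously attain the maximum: the perturbation must reduce all of them at once, which is why the freed budget $\delta$ is distributed across the entire set $T_{\max}$ rather than handed to a single test, and why $\delta$ must be chosen small enough that the single increasing coordinate $m(M_s,\cdot)$ does not overtake the now strictly reduced maximum. Everything else --- nonnegativity, the budget constraint, and the strict decrease --- follows immediately from the $1/\alpha$ form of $m$.
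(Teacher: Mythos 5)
Your proposal is correct and follows essentially the same exchange argument as the paper's own proof: both assume a non-equalized optimum, shift a small budget $\delta$ from a strictly sub-maximal test $s$ onto the maximizing set (split equally), and use continuity and strict monotonicity of $m(M,\cdot)$ in $\alpha$ to strictly decrease the objective, contradicting optimality. Your additional observations (the $1/\alpha$ form of $m$ and the fact that $\alpha_s^* > 0$ is forced) are minor refinements of the same argument.
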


\begin{proof}
Note that $m(M,\alpha)$ is continuous and strictly decreasing in $\alpha$ for any fixed $M$. Let $T^* \subseteq T$ denote the set of tests $t^*$ such that $m(M_{t^*},\alpha_{t^*}^*) = \max_t m(M_t, \alpha_t^*)$. Suppose for the sake of contradiction that there exists an $s \in T$ such that $m(M_s, \alpha_s^*) < m(M_{t^*},\alpha_{t^*}^*)$. For $\delta > 0$, we can define $(\tilde \alpha_t)_t$:
\[
\tilde \alpha_t =
\begin{cases}
\alpha_t^* + \delta / |T^*| & \text{if } t \in T^* \\
\alpha_t^* - \delta & \text{if } t = s \\
\alpha_t^* & \text{otherwise}
\end{cases}
\]
For $\delta$ sufficiently small, $\max_{t \in T} m(M_t, \tilde \alpha_t) < \max_{t \in T} m(M_t, \alpha_t^*)$, contradicting the optimality of $(\alpha_t^*)_t$.
\end{proof}

This characterization allows us to state our theorem which gives the sample complexity of the IC algorithm in the absence of a CI oracle.

\begin{theorem}[Sample complexity of the IC algorithm]
\label{th:sample_complexity}
As before, let $N = |V|$ denote the number of random variables considered. Let $R_1 = (\eps')^{-8/3}$ and $R_2 = (\eps')^{-8}$. Fix any $\alpha \in (0,1)$, and let $\alpha_0^*$ solve:
\begin{equation}
\label{eq:alpha0star_def}
\begin{split}
\alpha_0^* \bigg( \sum_{t \in T : M_t \le \lfloor R_1 \rfloor } 1 + 
\sum_{t \in T : \lfloor R_1 + 1 \rfloor \le M_t \le \lfloor R_2 \rfloor} 
\eps' M_t^{3/8} + \\
\sum_{t \in T : M_t \ge \lfloor R_2 + 1 \rfloor} 
 (\eps')^{6/7} M_t^{5/14} \bigg) = \alpha
\end{split}
\end{equation}
Then, given $Poisson(16C'/\alpha_0^*\gamma(\eps')^2)$ samples, Algorithm~\ref{alg:ic}, using Algorithm~\ref{alg:ci_test} as the conditional independence tester, recovers the correct equivalence class of causal models with probability greater than $1 - \alpha$. Here, $\gamma$ and $\eps'$ are as defined in Proposition~\ref{prop:ci_rates}.
\end{theorem}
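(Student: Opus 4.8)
The plan is to combine the per-test sample bound of Lemma~\ref{lem:one_CI_test}, the equalization property of Lemma~\ref{lem:m_equal}, the Bonferroni union bound (Proposition~\ref{prop:bonferroni}), and the completeness of the IC algorithm (Proposition~\ref{prop:ic_good}). The crucial structural observation is that, in each of the three regimes, the sample function factors as $m(M,\alpha) = c(M)/\alpha$, where $c(M)$ depends only on the support size $M$ and the fixed constants $C'$, $\gamma$, $\eps'$; explicitly, $c(M) = \frac{16C'}{\gamma(\eps')^2}$, $\frac{16C'}{\gamma\eps'}M^{3/8}$, and $\frac{16C'}{\gamma(\eps')^{8/7}}M^{5/14}$ in the three regimes respectively. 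This clean separation of the $\alpha$-dependence is what makes the budget optimization tractable.

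First I would apply Lemma~\ref{lem:m_equal} to an optimal allocation $(\alpha_t^*)_t$ solving~\eqref{eq:opt_alpha}, so that there is a single common value $m^* := m(M_t, \alpha_t^*)$ shared by every test $t \in T$. Inverting the factored form gives $\alpha_t^* = c(M_t)/m^*$ for each $t$. Substituting these into the budget constraint $\sum_{t \in T} \alpha_t^* = \alpha$ and solving for the common value yields $m^* = \frac{1}{\alpha}\sum_{t \in T} c(M_t)$.

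The remaining step is bookkeeping: factor the constant $\frac{16C'}{\gamma(\eps')^2}$ out of every $c(M_t)$, using $\frac{16C'}{\gamma\eps'} = \frac{16C'}{\gamma(\eps')^2}\cdot \eps'$ and $\frac{16C'}{\gamma(\eps')^{8/7}} = \frac{16C'}{\gamma(\eps')^2}\cdot(\eps')^{6/7}$ (since $2-8/7 = 6/7$). Grouping the tests by which regime their $M_t$ falls in---and noting that, because $M_t$ is an integer support cardinality, the regime boundaries $R_1=(\eps')^{-8/3}$ and $R_2=(\eps')^{-8}$ translate into the floor conditions $M_t \le \lfloor R_1\rfloor$, $\lfloor R_1+1\rfloor \le M_t \le \lfloor R_2\rfloor$, and $M_t \ge \lfloor R_2+1\rfloor$---the residual sum is exactly the bracketed quantity $S$ appearing in the definition~\eqref{eq:alpha0star_def} of $\alpha_0^*$. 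Hence $m^* = \frac{16C'}{\gamma(\eps')^2}\cdot\frac{S}{\alpha}$, and since $\alpha_0^* S = \alpha$ by definition, this simplifies to $m^* = \frac{16C'}{\alpha_0^*\gamma(\eps')^2}$, matching the Poisson rate in the statement.

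Finally I would assemble the confidence guarantee. With $K \sim Poisson(m^*)$ samples, each test $t$ attains confidence $1-\alpha_t^*$ by Lemma~\ref{lem:one_CI_test}; the Bonferroni correction of Proposition~\ref{prop:bonferroni} then gives the whole family confidence $1 - \sum_{t}\alpha_t^* = 1-\alpha$. On the event that every test returns the correct answer---which occurs with probability at least $1-\alpha$---Step~1 of the IC algorithm behaves exactly as a CI oracle, so Proposition~\ref{prop:ic_good} guarantees recovery of the correct equivalence class. I expect the main obstacle to be administrative rather than conceptual: carefully reconciling the constant factors and the floor-function regime boundaries across the three cases so that the residual sum matches $S$ exactly, and verifying that the standing hypothesis $m^* > M_t$ of Lemma~\ref{lem:one_CI_test} holds for every test (i.e., that we are genuinely in the high-sample regime) so the per-test bound applies uniformly across $T$.
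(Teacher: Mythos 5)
Your proposal is correct and follows essentially the same route as the paper: apply Lemma~\ref{lem:m_equal} to equalize the per-test sample counts, use the fact that $m(M,\alpha)$ scales as $c(M)/\alpha$ to express each $\alpha_t^*$ relative to $\alpha_0^*$, sum against the budget constraint to obtain Equation~\eqref{eq:alpha0star_def}, and close with Bonferroni plus Proposition~\ref{prop:ic_good}. Your explicit attention to the standing hypothesis $m^* > M_t$ of Lemma~\ref{lem:one_CI_test} is a detail the paper's proof leaves implicit, but otherwise the arguments coincide.
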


\begin{proof}
Let $(\alpha_t^*)_t$ denote an optimal solution to Equation~\eqref{eq:opt_alpha}. By Lemma~\ref{lem:m_equal}, it suffices to calculate $\alpha_0^*$, where $m(0,\alpha_0^*)$ equals the optimal value of Equation~\eqref{eq:opt_alpha}.\footnote{Here, we adopt a slight abuse of notation, noting that the value of $\alpha_t^*$ depends only on $M_t = |\support(B)|$ where $t = (\{i,j\},B)$.} Note that $m(0,\alpha_0) = 16C'/\alpha_0\gamma(\eps')^2$. Then, for any other $M$, we see the following holds by Lemma~\ref{lem:one_CI_test}:
\[
\alpha_M^*=
\begin{cases}
\alpha_0^* & \text{if } M \le (\eps')^{-8/3} \\
\alpha_0^* \eps' M^{3/8} & \text{if } (\eps')^{-8/3} < M \le (\eps')^{-8} \\
\alpha_0^* (\eps')^{6/7} M^{5/14} & \text{if } M > (\eps')^{-8}
\end{cases}
\]
The desired result follows by summing these terms across the total budget constraint on $\alpha$ in Equation~\eqref{eq:opt_alpha}.
\end{proof}

This bound becomes easier to interpret in the case where the variables are all have the same cardinality.

\begin{corollary}
\label{corr:same_supp_1}
Suppose, every random variable $X_i$ has the same number of possible values $\ell = |\support(X_i)|$. Then a confidence level of $1-\alpha$ is achieved with $Poisson(m)$ samples, where:
\[
m \ge \frac{16 C'}{\alpha\gamma (\eps')^2} {N \choose 2} (1+\ell^{3/8})^{N-2}
\]
\end{corollary}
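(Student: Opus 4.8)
The plan is to specialize Theorem~\ref{th:sample_complexity} to the case where every variable has support size $\ell$, and then bound the sum appearing in Equation~\eqref{eq:alpha0star_def} from above. Since the claimed sample count is $m(0,\alpha_0^*) = 16C'/\alpha_0^*\gamma(\eps')^2$, and $m$ is decreasing in $\alpha_0^*$, it suffices to produce a \emph{lower} bound on $\alpha_0^*$, which in turn follows from an \emph{upper} bound on the parenthesized sum in Equation~\eqref{eq:alpha0star_def}: if that sum is at most some quantity $S$, then $\alpha_0^* \ge \alpha/S$, and substituting back yields $m \le 16 C' S / \alpha\gamma(\eps')^2$. So the entire task reduces to showing that the three-part sum over $t \in T$ is bounded by $\binom{N}{2}(1+\ell^{3/8})^{N-2}$.

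The first step is to compute $M_t$ explicitly in the equal-cardinality setting. For a test $t = (\{i,j\}, B)$, the conditioning set $B \subseteq V \setminus \{i,j\}$ has $|\support(B)| = \ell^{|B|}$, since the variables are independent coordinates each with $\ell$ values. Next I would observe that the cleanest way to get the claimed closed form is to \emph{drop the case distinctions} and bound each summand by the largest of the three coefficient-times-power expressions. The key inequality I expect to use is that for every $t$, the contribution is at most $M_t^{3/8} = \ell^{3|B|/8}$ after absorbing the $\eps'$ factors; more carefully, each of the three regimes contributes a term of the form (constant in $\eps'$) times a power of $M_t$ that is dominated by the $M_t^{3/8}$ branch once one checks the regime boundaries. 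The goal is to replace the piecewise sum by $\sum_t M_t^{3/8}$, or equivalently $\sum_t \ell^{3|B|/8}$, at the cost of a factor that the authors have apparently folded into the leading constant.

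The combinatorial heart of the argument is then evaluating $\sum_{t \in T} \ell^{3|B|/8}$. By the definition $T = \bigsqcup_{\{i,j\}} 2^{V\setminus\{i,j\}}$, this sum factors as a sum over unordered pairs $\{i,j\}$ of $\sum_{B \subseteq V \setminus \{i,j\}} \ell^{3|B|/8}$. There are $\binom{N}{2}$ pairs, and for each the inner sum runs over all subsets of an $(N-2)$-element set. Grouping subsets by their size $k$ via the binomial theorem gives
\[
\sum_{B \subseteq V \setminus \{i,j\}} \ell^{3|B|/8} = \sum_{k=0}^{N-2} \binom{N-2}{k} (\ell^{3/8})^k = (1 + \ell^{3/8})^{N-2}.
\]
Multiplying by $\binom{N}{2}$ yields exactly the factor in the corollary, and substituting into the reduction from the first paragraph completes the proof.

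The main obstacle will be making the regime-collapse in the second paragraph rigorous: the sum in Equation~\eqref{eq:alpha0star_def} genuinely has three different exponents ($M_t^0$, $M_t^{3/8}$, and $M_t^{5/14}$) with three different $\eps'$-prefactors, yet the corollary reports a clean $\ell^{3/8}$-based geometric sum with no loss tracked. The honest version of the argument requires either (i) verifying that each regime's summand is pointwise bounded by $(1+\ell^{3/8})$-compatible terms so that the whole sum is at most $\binom{N}{2}(1+\ell^{3/8})^{N-2}$ up to the stated constants, using $5/14 < 3/8$ and the fact that the $\eps'$-powers are bounded in the relevant ranges of $M_t$, or (ii) treating the stated bound as the dominant high-support branch and arguing the other branches only decrease the count. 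I would want to be explicit about why the $M_t^{5/14}$ terms and the constant terms do not spoil the $\ell^{3/8}$ form; this inequality-chasing, rather than the clean binomial identity, is where the real care is needed.
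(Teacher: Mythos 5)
Your proposal follows essentially the same route as the paper's proof: bound each of the three regime summands in Equation~\eqref{eq:alpha0star_def} pointwise by $\ell^{3|B|/8}$ (using $\eps' \le 1$ and $5/14 < 3/8$, exactly your option (i), with no constant lost), then apply the binomial theorem over subsets $B \subseteq V \setminus \{i,j\}$ to get $\binom{N}{2}(1+\ell^{3/8})^{N-2}$ and invert via the monotonicity of $m(0,\cdot)$. The regime-collapse step you flag as the ``main obstacle'' is precisely what the paper asserts in one line, and your stated inequalities suffice to justify it; the only minor caveat is that $|\support(B)| = \ell^{|B|}$ should be read as an upper bound (the variables need not be independent), which is all the argument requires.
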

\begin{proof}
This follows by bounding the term in the parentheses in Equation~\eqref{eq:alpha0star_def}. 
Note that the terms within the sum are all upper-bounded by $\ell^{3|B|/8}$, where the test $t = (\{i,j\},B)$ and $|B|$ is the number of variables in the set $B$.
Thus, we have the bound $\alpha \le \alpha_0^* \sum_{t \in T} \ell^{3|B|/8}$. 
The rest follows from counting the number of elements in $T$: $\alpha_0^* \sum_{t \in T} \ell^{3|B|/8} = \alpha_0^* {N \choose 2} (1 +\ell^{3/8})^{N-2}$. 
Plugging this bound into $m(0,\alpha_0^*)$ yields the desired result.
\end{proof}


\subsection{The effect of domain expertise}

Now, we formalize domain expertise as a partial CI oracle. Intuitively, domain expertise allows us to know the outcome of some subset of conditional independence tests without using any samples. Thus, these terms do not enter into the Bonferroni correction evaluations of the previous section.

\begin{definition}[Partial CI oracle]
A {\bf partial CI oracle} is specified by some subset of $S \subseteq T$ (where $T$ is as defined in Equation~\eqref{eq:T_def}). For any $(\{i,j\},B) \in S$, the partial CI oracle outputs $true$ if $X_i \indep X_j \vert (X_k)_{k \in B}$ and $false$ otherwise.
\end{definition}

In this formalism, it is straightforward to modify the sample complexity results from the previous section accordingly.

\begin{corollary}[Sample complexity with domain expertise]
Suppose we have access to a partial CI oracle, which can respond to queries $S \subseteq T$. Let $\alpha_0^*$ solve Equation~\eqref{eq:alpha0star_def}, only with the sums across $t \in T \setminus S$ instead of $t \in T$.

Then, given $Poisson(16C'/\alpha_0^*\gamma(\eps')^2)$ samples, Algorithm~\ref{alg:ic}, using Algorithm~\ref{alg:ci_test} as the conditional independence tester, recovers the correct equivalence class of causal models with probability greater than $1 - \alpha$.
\end{corollary}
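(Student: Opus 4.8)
The plan is to observe that the entire argument behind Theorem~\ref{th:sample_complexity} carries over essentially unchanged once we recognize that the partial CI oracle resolves the queries in $S$ with certainty and without consuming any samples. Concretely, I would first split the collection of tests $T$ into two groups: those in $S$, which are answered exactly by the partial CI oracle, and those in $T \setminus S$, which must still be resolved empirically using Algorithm~\ref{alg:ci_test}. The key conceptual point is that the tests in $S$ introduce zero probability of error, so they contribute nothing to the family-wise error rate and consume none of the sample budget.

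Next I would redo the Bonferroni bookkeeping of this section with the index set restricted to $T \setminus S$. Assigning each empirical test $t \in T \setminus S$ a confidence level $1 - \alpha_t$, Proposition~\ref{prop:bonferroni} gives that every empirical test returns the correct answer with probability at least $1 - \sum_{t \in T \setminus S} \alpha_t$. Since the oracle answers are always correct, this is also the probability that every query required in Step~1 of Algorithm~\ref{alg:ic} is answered correctly. Invoking Proposition~\ref{prop:ic_good}, whenever all these answers are correct the IC algorithm recovers the correct equivalence class, so the recovery probability is at least $1 - \sum_{t \in T \setminus S} \alpha_t$. To meet the target $1 - \alpha$, I would impose the budget constraint $\sum_{t \in T \setminus S} \alpha_t = \alpha$ and solve the analogue of the optimization in Equation~\eqref{eq:opt_alpha} over this restricted index set.

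The remainder follows the proof of Theorem~\ref{th:sample_complexity} verbatim. Lemma~\ref{lem:m_equal} still applies to the restricted optimization, since its proof uses only the monotonicity of $m(M,\cdot)$ together with a perturbation argument, neither of which depends on the particular index set; hence at optimality all $m(M_t,\alpha_t^*)$ coincide. Using $m(0,\alpha_0^*) = 16C'/\alpha_0^*\gamma(\eps')^2$ as the common optimal value and the three-regime expression for $\alpha_M^*$ derived from Lemma~\ref{lem:one_CI_test}, I would sum the $\alpha_M^*$ across the restricted budget constraint. This reproduces exactly the left-hand side of Equation~\eqref{eq:alpha0star_def}, but with each sum now taken over $t \in T \setminus S$ rather than $t \in T$, which is precisely the defining equation for $\alpha_0^*$ in the statement. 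The expected sample count is then $m(0,\alpha_0^*) = 16C'/\alpha_0^*\gamma(\eps')^2$, drawn as $Poisson(\cdot)$.

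I do not anticipate any serious obstacle, as the result is a direct specialization of the main theorem. The only point meriting care is verifying that removing the tests in $S$ from the union bound is legitimate, i.e. that the oracle truly incurs neither error nor sample cost, so that the same $\alpha$ budget may be spread over a strictly smaller family of tests; this can only lower the required $\alpha_0^*$, and hence the sample complexity. A mild subtlety worth a remark is that Assumption~\ref{ass:min_dep} and the tester guarantees of Proposition~\ref{prop:ci_rates} remain necessary for the empirical tests in $T \setminus S$, while they play no role for the oracle-answered tests in $S$.
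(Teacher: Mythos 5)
Your proposal is correct and matches the paper's intent exactly: the paper omits an explicit proof, remarking only that the modification is straightforward, and the intended argument is precisely your restriction of the Bonferroni budget and the optimization in Equation~\eqref{eq:opt_alpha} to $T \setminus S$, with the rest of the proof of Theorem~\ref{th:sample_complexity} carried over verbatim. Your added remarks on why the oracle-answered tests incur neither error nor sample cost are a sound justification of the step the paper takes for granted.
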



This corollary holds for general partial CI oracles. When more structure is provided, we can more explicitly calculate the bounds. For example, the PC algorithm is based on a priori knowledge of the sparsity of the graph~\cite{Spirtes:2001aa}. If a causal direction is known, then this also corresponds to a set of tests which we do not need to run. This is formalized in the following corollaries.


\begin{corollary}[Sample complexity under a sparsity prior]
\label{corr:pc}
Suppose every random variable $X_i$ has the same number of possible values $\ell = |\support(X_i)|$, and we know a priori that any context $(X_k)_{k \in B}$ that renders two random variables $X_i$ and $X_j$ conditionally independent will have at most $|B| \le R$ random variables. Then a confidence level of $1-\alpha$ is achieved with $m$ such that:
\[
m \ge \frac{16 C'}{\alpha\gamma (\eps')^2} {N \choose 2} \sum_{k = 0}^R {{N - 2} \choose k} \ell^{3k/8}
\]
\end{corollary}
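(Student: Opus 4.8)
The plan is to reduce this statement to the domain-expertise corollary by observing that the sparsity prior lets us skip every test with a large conditioning set. First I would set $S = \{(\{i,j\},B) \in T : |B| > R\}$ and argue that, under the stated prior, Step 1 of the IC algorithm never needs to query a test in $S$: since any set $B$ rendering $X_i \indep X_j \vert (X_k)_{k \in B}$ is guaranteed to have $|B| \le R$, restricting the inner search in Algorithm~\ref{alg:ic} to conditioning sets of size at most $R$ still locates a separating set whenever one exists, and otherwise correctly declares $\{i,j\}$ to be an edge. Consequently the tests in $S$ can be treated exactly as a partial CI oracle would be treated, and the preceding corollary on domain expertise applies with the budget constraint in Equation~\eqref{eq:alpha0star_def} summed over $t \in T \setminus S$, i.e.\ over tests with $|B| \le R$.

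Next I would bound each coefficient appearing in this restricted version of Equation~\eqref{eq:alpha0star_def}, reusing the estimate from the proof of Corollary~\ref{corr:same_supp_1}. With common cardinality $\ell$, a test $t = (\{i,j\},B)$ has $M_t = \ell^{|B|}$, so the three regime coefficients $1$, $\eps' M_t^{3/8}$, and $(\eps')^{6/7} M_t^{5/14}$ are each at most $\ell^{3|B|/8}$; this uses $\eps' \le 1$ (as $\eps$ is a total-variation distance and the normalizing denominator in $\eps'$ is at least $1$) together with $5/14 < 3/8$ and $M_t \ge 1$. Hence the entire parenthesized sum is dominated by $\sum_{t \in T \setminus S} \ell^{3|B|/8}$.

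Finally I would evaluate this dominating sum by counting tests. For each of the $\binom{N}{2}$ unordered pairs $\{i,j\}$ there are $\binom{N-2}{k}$ conditioning sets $B \subseteq V \setminus \{i,j\}$ with $|B| = k$, and only those with $k \le R$ survive in $T \setminus S$; grouping by $k$ gives $\sum_{t \in T \setminus S} \ell^{3|B|/8} = \binom{N}{2}\sum_{k=0}^R \binom{N-2}{k}\ell^{3k/8}$. Substituting into the restricted budget equation yields $\alpha \le \alpha_0^* \binom{N}{2}\sum_{k=0}^R \binom{N-2}{k}\ell^{3k/8}$, hence $\alpha_0^* \ge \alpha / \big(\binom{N}{2}\sum_{k=0}^R \binom{N-2}{k}\ell^{3k/8}\big)$; plugging this into $m(0,\alpha_0^*) = 16C'/(\alpha_0^*\gamma(\eps')^2)$ shows that the stated $m$ is a sufficient expected sample count, since increasing $m$ only sharpens each test.

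I expect the only genuinely delicate step to be the first one: arguing that discarding all tests with $|B| > R$ does not corrupt the output of the IC algorithm. The regime-bounding and counting steps are routine adaptations of the domain-expertise and common-cardinality corollaries, but one must confirm that the sparsity assumption guarantees not merely correct skeleton recovery but also that the separating sets $S_{\{i,j\}}$ found among small conditioning sets suffice for correct $v$-structure orientation in Step 2, so that Proposition~\ref{prop:ic_good} continues to apply to the restricted search. Everything downstream then follows from the earlier lemmas.
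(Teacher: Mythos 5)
Your proposal is correct and follows essentially the same route as the paper: reduce to the domain-expertise corollary by treating the large-conditioning-set tests as answered by a partial CI oracle, then reuse the coefficient bound and counting argument from Corollary~\ref{corr:same_supp_1}. The only discrepancy is notational and in your favor: the paper's one-line proof writes the oracle set as $S = \{(\{i,j\},B) \in T : |B| \le R\}$, which is inconsistent with its own domain-expertise corollary (where the data-driven tests range over $T \setminus S$); your choice $S = \{(\{i,j\},B) : |B| > R\}$ is the one that actually yields the stated bound.
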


\begin{proof}
This follows similarly to Corollary~\ref{corr:same_supp_1}, where we take the oracle as defined on $S = \{ (\{i,j\},B) \in T : |B| \le R \}$.
\end{proof}

We observe that the growth rate of the sample complexity for the PC algorithm is much smaller than the exponential growth rate of sample complexity for the IC algorithm. We visualize this in Figure~\ref{fig:ICvsPC}. Thus, our bounds allow us to quantify the value of the sparsity prior in terms of the number of samples saved. Intuitively, as the number of nodes grows, this sparsity prior becomes more and more beneficial.

\begin{figure}[!h]
\vskip 0.2in
\begin{center}
\centerline{\includegraphics[width=\columnwidth]{./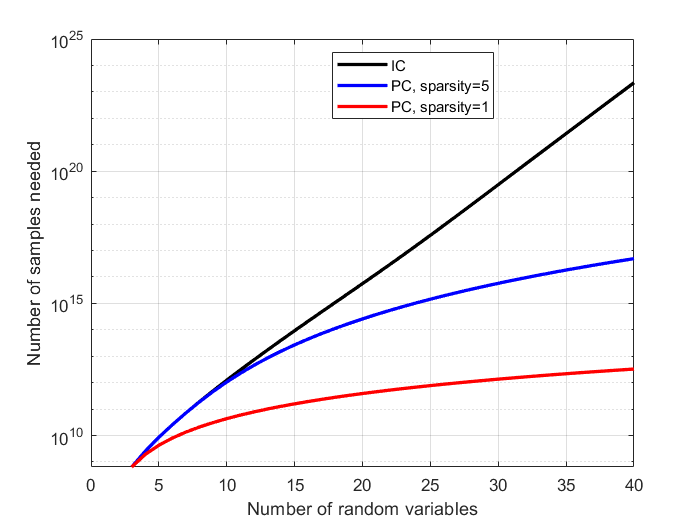}}
\caption{The number of samples needed to guarantee a confidence level of $1 - \alpha = 0.95$ with $\ell = 2$ and $\eps = 0.1$.}
    \label{fig:ICvsPC}
\end{center}
\vskip -0.2in
\end{figure}

\begin{corollary}[Sample complexity with a known causal direction]
Suppose every random variable $X_i$ has the same number of possible values $\ell = |\support(X_i)|$. Furthermore, suppose we know a priori the following set of causal dependencies: $D = \{(i_k,j_k)\}_k$ where $(i_k,j_k)$ is an edge in our causal graph for each $k$. Then a confidence level of $1-\alpha$ is achieved with $m$ such that:
\[
m \ge \frac{16 C'}{\alpha\gamma (\eps')^2} \left({N \choose 2}-|D|\right) (1+\ell^{3/8})^{N-2}
\]
\end{corollary}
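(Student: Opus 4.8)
The plan is to mimic the proof of Corollary~\ref{corr:same_supp_1}, but to account for the tests that we no longer need to run because a causal direction is known. The key observation is that knowing an edge $(i_k,j_k)$ is present in the causal graph tells us that $X_{i_k}$ and $X_{j_k}$ are \emph{never} rendered conditionally independent by any context: for every $B \subseteq V \setminus \{i_k,j_k\}$, the test $(\{i_k,j_k\},B)$ is known a priori to return $false$. Thus each known dependency in $D$ removes an entire fiber $2^{V \setminus \{i_k,j_k\}}$ of tests from $T$, i.e. all tests whose unordered node pair is $\{i_k,j_k\}$.

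First I would invoke the partial CI oracle Corollary above, taking the oracle to be defined on the set $S = \bigsqcup_{k} 2^{V \setminus \{i_k,j_k\}}$, the union of the fibers over all node pairs appearing in $D$. Since the $|D|$ dependencies correspond to $|D|$ distinct unordered pairs $\{i_k,j_k\}$, the remaining index set $T \setminus S$ is exactly the disjoint union over the $\binom{N}{2} - |D|$ node pairs that are \emph{not} known to be connected, each contributing its full powerset $2^{V \setminus \{i,j\}}$. Then I would carry out the same bounding step as in Corollary~\ref{corr:same_supp_1}: each term in the sum defining $\alpha_0^*$ over $t = (\{i,j\},B) \in T \setminus S$ is upper-bounded by $\ell^{3|B|/8}$, and summing over all $B \subseteq V \setminus \{i,j\}$ for a fixed pair gives $\sum_{B} \ell^{3|B|/8} = (1+\ell^{3/8})^{N-2}$ by the binomial theorem. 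Summing over the $\binom{N}{2} - |D|$ surviving pairs yields $\alpha \le \alpha_0^* \left(\binom{N}{2} - |D|\right)(1+\ell^{3/8})^{N-2}$, and substituting into $m(0,\alpha_0^*) = 16C'/\alpha_0^*\gamma(\eps')^2$ gives the stated bound.

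The step requiring the most care is the bookkeeping that each known edge removes a \emph{complete} fiber of $2^{N-2}$ tests, rather than a single test. This hinges on the correctness semantics rather than pure counting: an edge's presence must be certified against every possible conditioning set, which is precisely what Step~1 of the IC algorithm checks when it fails to find a separating set. One should also confirm that the $|D|$ pairs are genuinely distinct as unordered pairs so that the removed fibers are disjoint and the count $\binom{N}{2}-|D|$ is exact; if $D$ contained both $(i,j)$ and $(j,i)$ these would collapse to one pair, but since $D$ lists edges of a DAG this degeneracy does not arise. The remaining manipulations are identical to those in Corollary~\ref{corr:same_supp_1} and present no further obstacle.
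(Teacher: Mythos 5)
Your proposal is correct and follows exactly the route the paper intends: the paper leaves this corollary without an explicit proof, but the evident argument (mirroring the proof of Corollary~\ref{corr:pc}) is to instantiate the domain-expertise corollary with the partial CI oracle defined on the full fibers $\{(\{i_k,j_k\},B) : B \subseteq V\setminus\{i_k,j_k\}\}$ for each known edge, then repeat the bounding from Corollary~\ref{corr:same_supp_1} over the remaining ${N \choose 2}-|D|$ pairs. Your added justification that a known edge certifies \emph{every} test in its fiber (via faithfulness, the direct edge is never blocked by any conditioning set) is the right key observation and is exactly what makes the fiber removal legitimate.
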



We note a few things about this model for domain expertise and our analysis. It assumes that the partial CI oracle merely replaces the conditional independence tests in Algorithm~\ref{alg:ic}. It does not assume that any re-ordering of tests is done, in light of the results of the oracle queries. One interpretation of this is that one does not assume the result is known until {\em after} the oracle is queried. If we know a priori that a test result will successfully identify a conditional independence, this will correspond to a loop break in Algorithm~\ref{alg:ic}, and some tests in $T$ will not need to be conducted. Furthermore, we may be able to re-order the tests to reduce the tests that should be conducted. 

The reasons for analyzing sample complexity from an ex-ante perspective are two-fold. First, it is easier to analyze, and provides a worst-case bound in the ex-post setting as well. Additionally, this formalization is due in part to our motivating application. 
This analysis can identify which experiments we {\em should} conduct, from a data-efficiency perspective. In practical settings, the decision to conduct an experiment must be made without knowing what the outcome of the experiment will be.


A limitation of this definition is it only captures the effect of domain expertise on sample complexity. Another fashion in which this knowledge can be useful is to distinguish observationally equivalent causal graphs: some a priori knowledge could identify the true causal graph from the equivalence class of observationally equivalent graphs. This is not captured in our formalism.


\section{Numerical Validation}
\label{sec:computation}


In this section we compare our bounds for sample complexity against results obtained by running the IC and PC algorithm on simulated data for causal graphs with different number of nodes. 
For the simulations in this section, we take the causal graph of $N$ nodes to be the graph where $X_1, X_2,.., X_{N-1}$ are independent and are direct causes of $X_N$. For the simulation, $P(X_i = 0) = 0.6$ for $i = 1,2,..,N-1$ and $X_N = X_1 \lor X_2 \lor ... X_{N-1}$. The causal graph is shown in Figure~\ref{fig:causal_graph}.

\begin{figure}[h]
\vskip 0.2in
\begin{center}
\centerline{\includegraphics[width=0.75\columnwidth]{./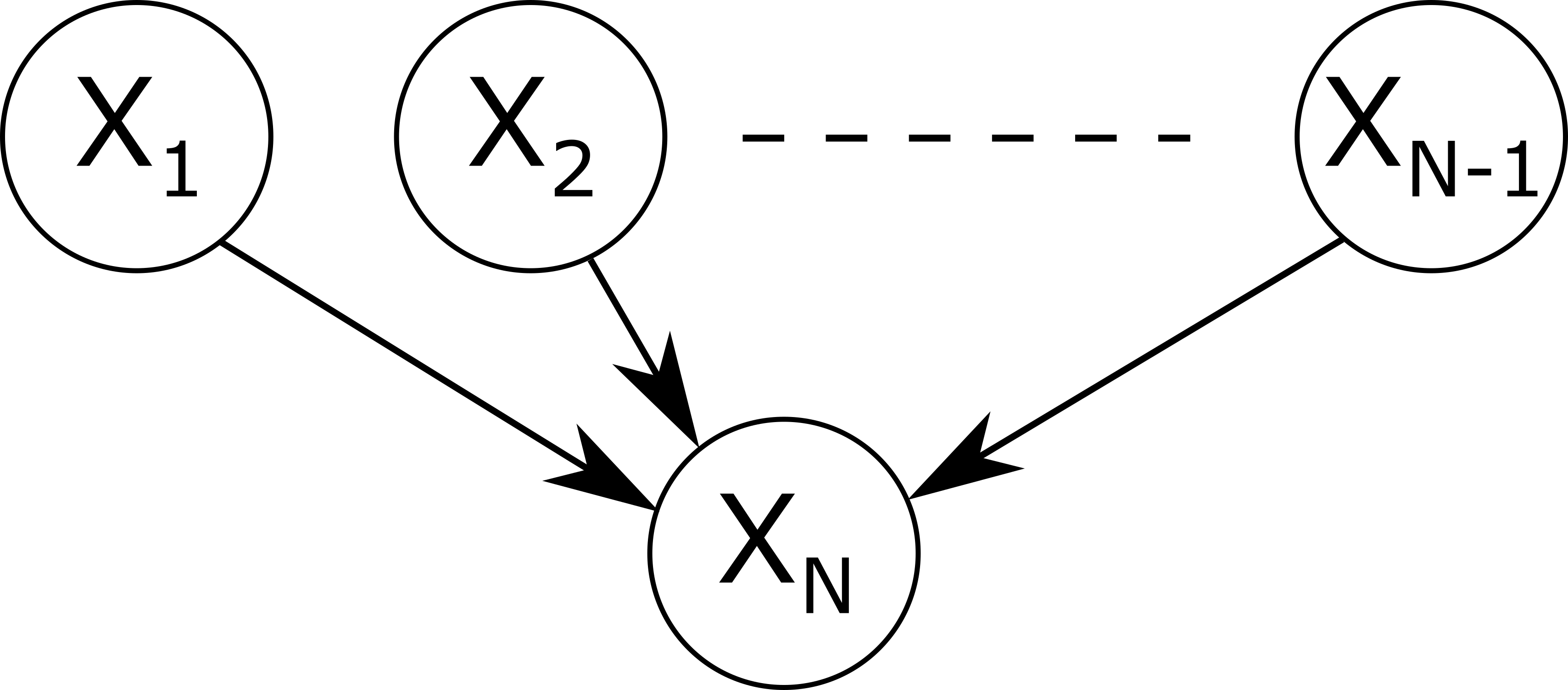}}
\caption{Causal graph with $N$ nodes used in simulations.}
    \label{fig:causal_graph}
\end{center}
\vskip -0.2in
\end{figure}

\subsection{Results for IC Algorithm}

We ran the IC algorithm on data generated according to the causal graph in Figure~\ref{fig:causal_graph}. We compare the empirical error rate from our numerical simulations with the theoretical error rate determined by Theorem~\ref{th:sample_complexity} in Figure~\ref{fig:results_IC}. We note that our results were up to a multiplicative constant, and, when these constants were explicitly calculated, the number of samples proved to be quite conservative. There are a few sources of conservativeness in our analysis. First, the union bound used to control the family-wise error rate is conservative. Second, our analysis can only guarantee confidence when every conditional independence test is accurate; however, in practice, we may still recover the correct causal graph even when some tests fail. For example, when $N = 3$, if the $X_1 \indep X_2$ test evaluates to $false$ but the independence test $X_1 \indep X_2 \vert X_3$ evaluates to $true$, the true causal graph will be recovered.
However, this is difficult to use in any confidence level guarantees: it is difficult to determine which tests can fail while guaranteeing recovery of the true causal graph, especially without knowledge of the true causal structure. We note that the sample complexity rates are relatively representative of the empirical rates up to a scaling constant, which suggests that these bounds, though conservative, may still provide a useful quantification of the benefits of domain expertise.

\begin{figure}[!ht]
\vskip 0.2in
\begin{center}
\centerline{\includegraphics[width=\columnwidth]{./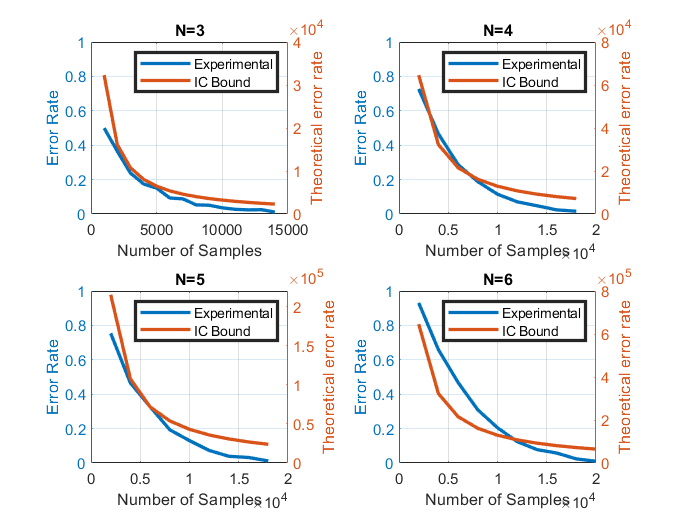}}
\caption{A comparison of the empirical error rate for our causal model using the IC algorithm, compared with our theoretical guarantees. The empirical rate is calculated based on 1000 trials. $N$ represents the number of nodes in the causal graph model of Figure~\ref{fig:causal_graph}.}
    \label{fig:results_IC}
\end{center}
\vskip -0.2in
\end{figure}

\subsection{Results for PC Algorithm}

We conducted a similar comparison for the PC algorithm in Figure~\ref{fig:results_PC}, compared with the theoretical rates in Corollary~\ref{corr:pc}. Similar to the IC case, we see that the rates seem accurate up to a multiplicative constant. Additionally, the multiplicative constant is far better due to the sparsity prior reducing the error introduced by the union bound.

\begin{figure}[!ht]
\vskip 0.2in
\begin{center}
\centerline{\includegraphics[width=\columnwidth]{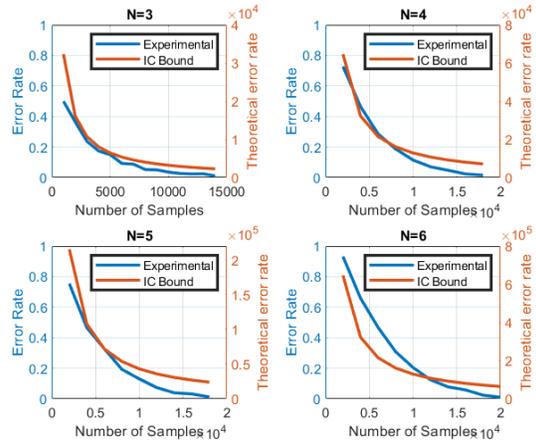}}
\caption{A comparison of the empirical error rate for our causal model using the PC algorithm with a sparsity prior of $R = 1$, compared with our theoretical guarantees. The empirical rate is calculated based on 1000 trials. $N$ represents the number of nodes in the causal graph model of Figure~\ref{fig:causal_graph}.}
    \label{fig:results_PC}
\end{center}
\vskip -0.2in
\end{figure}


\section{Closing Remarks}
\label{sec:conclusion}

In this paper, we outlined the sample complexity of the IC algorithm in the absence of a CI oracle. When domain expertise or other a priori knowledge can be modeled with a partial CI oracle, we have specified the sample complexity of the IC algorithm with this side information. These results were derived by combining recent bounds on the sample complexity of conditional independence testing, an analysis of the graph traversal in the IC algorithm, and family-wise error rate correction methods. These sample complexity rates can serve as a guiding principle for experiment design when passively observed data is available: an experiment can be modeled as knowing the independence (or lack thereof) of two random variables conditioned on a particular context. Thus, our results provide a quantifiable metric to evaluate which experiments (if conducted) can best improve the confidence of causal discovery tests, in the presence of both passively and actively collected data.




\pagebreak

\bibliography{DONG_ROY-refs}
\bibliographystyle{icml2021}


\end{document}